\documentclass[letterpaper, 10 pt, conference]{ieeeconf}
\IEEEoverridecommandlockouts
\overrideIEEEmargins
\usepackage{times}
\usepackage[utf8]{inputenc}
\usepackage[T1]{fontenc}
\usepackage{import}
\usepackage{bbm}
\usepackage{graphicx}
\usepackage{dblfloatfix}
\usepackage{makecell}
\usepackage[export]{adjustbox}
\usepackage[cmex10]{amsmath}
\usepackage{amssymb}

\usepackage{amsthm}
\usepackage{cite}
\usepackage[caption=false,font=footnotesize]{subfig}
\usepackage[usenames, dvipsnames]{color}
\usepackage{overpic}
\usepackage[section]{placeins}
\usepackage{complexity}
\usepackage{balance}
\usepackage{siunitx}
\usepackage[hidelinks]{hyperref}
\graphicspath{{./figures/}}

\usepackage{bbm}
\usepackage{algorithm}
\usepackage{float}
\newfloat{algorithm}{t}{lop}

\PassOptionsToPackage{noend}{algpseudocode}
\usepackage{algpseudocode}
\usepackage{cleveref}
\newtheorem{theorem}{Theorem}
\newtheorem{corollary}[]{Corollary}
\newtheorem{observation}[]{Observation}

\newcommand{\revision}[1]{{\color{black}#1}}

\newcommand{\delete}[1]{{}}
\newcommand{\BILLE}{\textsc{Bill-E} }

\crefname{figure}{Fig.}{Figs.}
\crefname{theorem}{Theorem}{Theorems}
\crefname{corollary}{Corollary}{Corollaries}
\crefname{observation}{Observation}{Observations}
\crefname{algorithm}{Algorithm}{Algorithms}

\usepackage[normalem]{ulem}

\usepackage[switch]{lineno}  

\makeatletter
\newcommand*{\algrule}[1][\algorithmicindent]{\makebox[#1][l]{\hspace*{.5em}\vrule height .75\baselineskip depth .25\baselineskip}}%

\newcount\ALG@printindent@tempcnta
\def\ALG@printindent{%
    \ifnum \theALG@nested>0
        \ifx\ALG@text\ALG@x@notext
            \addvspace{-3pt}
        \else
            \unskip
            \ALG@printindent@tempcnta=1
            \loop
                \algrule[\csname ALG@ind@\the\ALG@printindent@tempcnta\endcsname]%
                \advance \ALG@printindent@tempcnta 1
            \ifnum \ALG@printindent@tempcnta<\numexpr\theALG@nested+1\relax
            \repeat
        \fi
    \fi
    }%
\usepackage{etoolbox}
\patchcmd{\ALG@doentity}{\noindent\hskip\ALG@tlm}{\ALG@printindent}{}{\errmessage{failed to patch}}
\makeatother

\begin{document}
\raggedbottom
\author{Javier Garcia$^{1}$, Michael Yannuzzi$^{1}$, Peter Kramer$^{2}$, Christian Rieck$^{2}$, S\'andor P.~Fekete$^{2}$, and Aaron T. Becker$^{1,2}$
}%
\title{
\LARGE\bf Reconfiguration of a 2D Structure Using Spatio-Temporal Planning and Load Transferring
\vspace{-1em}
\thanks{
This work was supported by the National Science Foundation under  \href{http://nsf.gov/awardsearch/showAward?AWD_ID=1553063}{[IIS-1553063},
\href{https://nsf.gov/awardsearch/showAward?AWD_ID=1849303}{1849303},
\href{https://nsf.gov/awardsearch/showAward?AWD_ID=2130793}{2130793]},
the DFG project „Space Ants“, grant number FE 407/22-1,
and the Alexander von Humboldt Foundation.
}
\thanks{
$^{1}$Electrical Engineering, University of Houston, TX USA\newline {\tt\small \{jgarciagonzalez,mcyannuzzi,atbecker\}@uh.edu}}
\thanks{
$^{2}$Computer Science, TU Braunschweig, Braunschweig, Germany {\tt\small \{kramer,rieck,fekete\}@ibr.cs.tu-bs.de}}
}%
\maketitle
\begin{abstract}
We present progress on the problem of reconfiguring a 2D arrangement of building material by a cooperative \revision{group} of robots.
These robots must avoid collisions, deadlocks, and are subjected to the constraint of maintaining connectivity of the structure.
We develop two reconfiguration methods, one based on spatio-temporal planning, and one based on target swapping, to increase building efficiency. 
The first method can significantly reduce planning times compared to other multi-robot planners. The second method helps to reduce the amount of time robots spend waiting for paths to be cleared, and the overall distance traveled by the robots.
\end{abstract}

\section{Introduction}\label{sec:Intro}

A challenge in robotics is to use agents (robots) to change the configuration of a supply of
passive building material. A typical task arises from relocating 
a collection of tiles from a given
start configuration into a desired goal configuration in an efficient manner. 
Reconfiguration time can be decreased by using multiple robots,
but this requires careful coordination to avoid collisions, deadlocks, imbalanced task allocation between robots, as well
as maintaining important constraints such as
connectivity of the structure.

These considerations play an important role when 
constructing \emph{large-scale} configurations,
ranging from plans for kilometer-scale manufacturing structures in
space~\cite{jenett2017design,abdel2020space}, to millimeter-scale smart
material~\cite{thalamy2021engineering}, and nano-scale assembly with
DNA~\cite{song2017reconfiguration}. 
In such domains, where disconnected components can drift apart, it is often necessary that the structure remains a single component.

In previous work~\cite{single-bille-reconfig-IROS}, we showed how a sampling-based
approach (the RRT$^*$) can be used to enable a single robot to reconfigure a 2D set of connected tiles in complex environments, where multiple obstacles may be present. 
\revision{In this paper, we assume that the reconfiguration sequence is already given, by RRT* or another approach, and focus on multi-robot cooperation to carry it out}. This requires methods for coordination and task allocation to improve reconfiguration times without disconnecting the structure.

\Cref{fig:NewLeading} shows the hardware platform that motivates the constraints in this theoretical work.
Two \BILLE bots, a platform developed by Jenett et al.~\cite{jenett2019material,ultralight24}, are standing on a tile structure. 
\revision{These inchworm robots step on the tiles, using a rotating key on the bottom of their feet to lock on the structure. A gripper located on the front foot can move up and down to pick and place tiles, and the robots can move while carrying the tiles}.
In order to guarantee that the robots can reach any point in the structure, the latter must remain connected throughout the reconfiguration.

\begin{figure}[t]
\setlength{\abovecaptionskip}{0pt}
\centering
    \adjincludegraphics[width=1\columnwidth,trim={{0.0\width} {0.0\width} {0.0\width} {0.0\width}},clip] {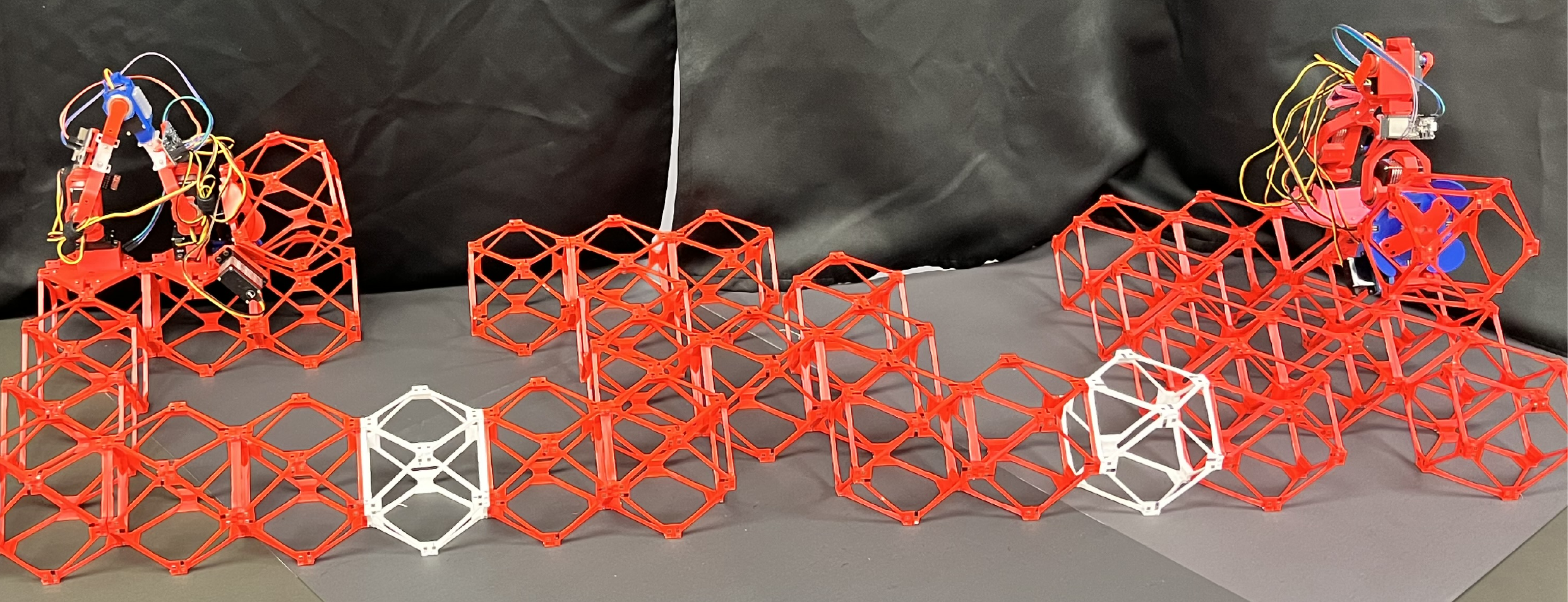}
    \adjincludegraphics[width=1\columnwidth,trim={{0\width} {0.5cm\width} {0\width} {0.8cm\width}},clip] {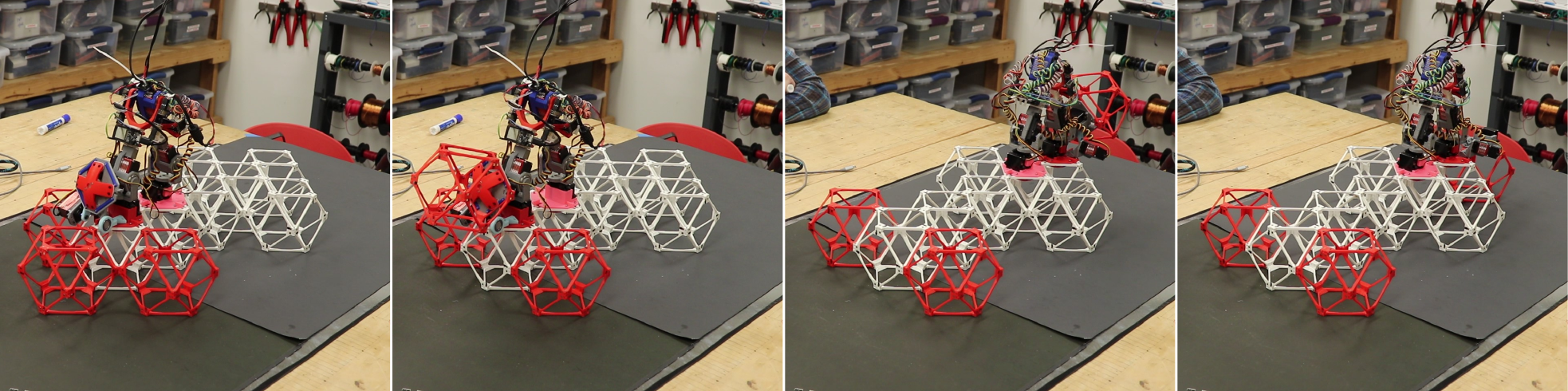}
    \caption{\label{fig:NewLeading}
    This paper implements and compares algorithms for automated reconfiguration using \revision{multiple robots}. Above is a physical representation~of the problem, and a sequence of frames showing a \BILLE bot moving a~tile. See video overview at~\href{https://youtu.be/tCKMjhkzbp8}{https://youtu.be/tCKMjhkzbp8}.
    }
   \vspace{-1em}
\end{figure}

\section{Related Work}\label{sec:RW}

\subsection{Automated reconfiguration}
\label{subsec:automated-reconfiguration}
In~\cite{jenett2019material}, the \BILLE bot traverses and alters a tile structure. 
For a single robot, the assembly sequence is deterministic, and is handled layer by layer in the three-dimensional case. 
The multi-robot case is handled by treating other robots as obstacles.  
If a desired path is blocked, a robot waits it to be cleared/constructed before placing their tile. 
In non-trivial configurations, this approach results in waiting times that increase with the number of robots.

Work on simplifying complex reconfiguration using principles of finite automata is presented in~\cite{abdel2020space},~\cite{fekete2022connected} and~\cite{NiesReconfig}. 
Structures can be built, scaled and rotated while respecting the constraint of tile connectivity. 

In our case, we exploit the ability to temporarily store tiles at any valid location when reconfiguring structures. 
One benefit is that  this enables creating bridges that act as shortcuts.
Deterministically finding these intermediate structures is challenging, so in~\cite{single-bille-reconfig-IROS} we used a sampling-based approach. 
The reconfiguration sequence is updated as lower cost paths are found. 
While we have only considered one robot (moving only one tile at a time), this paper focuses on performing reconfiguration sequences with~$m\in \mathbb{N}$ robots.

\subsection{Multi-robot planning}
\label{subsec:multi-robot-planning}
While increasing the number of robots from 1 to $m$ has potential benefits, it also causes additional difficulties. 
A~formulation as a centralized path planning problem raises the DoF by a factor of $m$.
The increased complexity makes sampling-based approaches harder to compute.
Moreover, the practical use of \revision{multiple robots} relies on \emph{distributed} methods for path planning and motion control.

To reduce the complexity of the problem, de-coupled planning methods have been developed.
Otte and Correll~\cite{Otte2018subspace} segment the joint problem and thus facilitate path computations; segments are only combined if no solution can be found.
Wagner and Choset~\cite{Choset2011Mstar} combine segmentation with~A$^*$ to create the M$^*$ algorithm, only considering neighbors when expanding from a given vertex.
Additionally, collision information is propagated to find paths without collisions.
Furthermore, a modification to the D$^*$ Lite algorithm is given by Pent et al.~\cite{Peng2015Dstar}, where robots are treated as obstacles and graph costs are updated as the agents move.

\subsection{\revision{Multi-agent cooperation}}

Increased reconfiguration speed can be achieved by considering swarms of cooperating robots, such as described in~\cite{FeketeKKRS23-journal-connected,isaac/FeketeKRS022,dfk+-cmprs-19}.
In addition, we can employ the robots' ability to swap targets as well as tasks. 
A \BILLE can, e.g., transfer its carried tile to a second \BILLE and proceed to a new target if doing so reduces the overall reconfiguration cost. 

We take inspiration from~\cite{Wang2020swap}, where Wang and Rubenstein use local task swapping to create a shape with a homogeneous swarm by using a hop-count algorithm.
Compared to their application, task swapping for the \BILLE bots is more complicated, since they cannot transfer carried tiles directly in their current version.
Instead, the tile must be placed where another robot can later pick it up.

\section{Computational Complexity}\label{subsec:LowerBounds}

We consider the \emph{workspace} to be a rectangular unit grid, where each cell is either free, filled by a tile, or filled by an obstacle.
We are interested in reconfiguring a set of tiles from a start to a goal configuration.
Both are \emph{connected} components, i.e., for every tile there is a path for the robot on the present configuration to every other tile. 
These shapes are called \emph{polyominoes}. 
As neither a robot nor a tile carried by a robot can cross an obstacle, we assume that start and goal configurations are located within the same connected component of free space.
\revision{With this assumption, there always exists a feasible reconfiguration sequence.}

An arrangement is reconfigured by moving a robot to an adjacent position of a tile, picking it up, walking along a path on the remaining configuration, then placing the tile in another location.
An ordered series of these operations is called a \emph{reconfiguration sequence}.
Distances between workspace positions are defined by the length of the geodesic edge-connected path between them, determined by a \emph{breadth-first search} tree over the configuration.
We call the problem of deciding whether the exists a sequence of a specific length that reconfigures a start into a goal configuration by a single robot the \textsc{\BILLE reconfiguration} problem.
If more than a single robot is considered, i.e., we have $m$ robots available, we refer to this as the \textsc{multi-robot} or \textsc{cooperative} variant.
We make use of the \textsc{Hamiltonian path} problem in grid graphs to prove \NP-hardness of deciding the length of optimal reconfiguration sequences.
This problem, which asks us to decide whether there exists a path in the input grid graph that visits each vertex exactly once, was shown to be \NP-complete by Itai et al.~\cite{ItaiPS82}.

Before arguing hardness of the reconfiguration problem, we observe that we can certify a correct solution in polynomial time, implying membership in \NP.
Clearly, it is possible to confirm the validity of an individual move in polynomial time with the number of tiles and robots involved.
Validating an arbitrary reconfiguration sequence is therefore possible by individually validating the steps involved.
\begin{observation}
    We can validate solutions in polynomial time, so \textsc{\BILLE reconfiguration} is in~\NP.
\end{observation}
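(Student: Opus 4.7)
The plan is to exhibit the reconfiguration sequence itself as the NP certificate and describe a polynomial-time verifier that simulates it step by step. A candidate certificate lists an ordered series of at most $k$ moves, where $k$ is the length bound from the input instance; each move records which of the $m$ robots acts, the tile it picks up, the walking path it takes on the current configuration, and the cell where it places the tile.

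First I would bound the certificate size. Any single walk in the workspace has length at most the diameter of the grid, which is polynomial in the number $n$ of cells, and each grid position requires only $O(\log n)$ bits. With at most $k$ moves across $m$ robots, the total certificate size is polynomial in $n$, $m$, and $k$, which is the right notion of polynomial provided $k$ is presented in unary (the natural encoding for a length parameter).

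Next I would describe the verifier. It maintains the current configuration, initialized to the start polyomino, together with the positions of all robots. For each recorded move it checks that (i)~the acting robot is adjacent to the specified tile in the current configuration; (ii)~the claimed walking path is contiguous and lies on the remaining tiles after pickup; (iii)~the drop cell is free and adjacent to the existing structure; (iv)~the resulting polyomino is still connected, which is decided by a single BFS or DFS; and (v)~no two robots ever occupy the same cell at the same time step. Each of these local checks runs in time polynomial in $n$ and $m$, so the whole simulation is polynomial. The verifier accepts iff the final configuration equals the goal and the accumulated move lengths sum to at most $k$, where the length of each move is computed as a BFS-tree geodesic on the configuration in effect when the move is made.

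No step of this plan presents a real obstacle: the argument is essentially careful bookkeeping of a grid simulation. The only point requiring mild attention is making the cost model precise, namely interpreting move length through the geodesic on the current polyomino, which is well defined and trivially computable on the fly as the verifier advances through the certificate.
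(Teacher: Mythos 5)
Your proposal is correct and takes essentially the same approach as the paper: the certificate is the reconfiguration sequence itself, and the verifier checks each move's validity (and hence the whole sequence) in polynomial time. You simply spell out the bookkeeping (connectivity, collisions, geodesic costs, encoding of the length bound) that the paper's one-paragraph argument leaves implicit.
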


\begin{theorem}\label{thm:bille-reconf-np-complete}
    \textsc{\BILLE reconfiguration} is \NP-complete.
\end{theorem}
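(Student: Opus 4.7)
Since the preceding observation already places \textsc{\BILLE reconfiguration} in \NP, it remains to establish \NP-hardness. I plan to reduce from \textsc{Hamiltonian Path in Grid Graphs}, which was shown to be \NP-complete by Itai et al.~\cite{ItaiPS82}. Given an instance $(G,s,t)$ with $|V(G)|=n$, the goal is to build start and goal configurations $S,T$ and a budget $k=\mathrm{poly}(n)$ such that a reconfiguration sequence of length at most $k$ exists exactly when $G$ admits a Hamiltonian $s$--$t$ path.

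My intended construction realizes $G$ itself as a polyomino of $n$ tiles, occupying the cells of $V(G)$. I would then attach two small auxiliary gadgets outside $G$: a single courier tile $x$ adjacent to $s$ in $S$, and a mirror position adjacent to $t$ that must be filled by $x$ in $T$. Additional ``anchor'' tiles would be placed so that during the transport of $x$ the remaining connected configuration on which the robot is permitted to walk is exactly the polyomino $G$, and so that the robot is required to begin at a cell adjacent to $s$ and to terminate at a cell adjacent to $t$. The crucial design choice is to rigidify the instance so that the only way to finish within budget $k$ is for the robot's walking phase (while carrying $x$) to visit every tile of $G$ exactly once, forming a Hamiltonian traversal from $s$ to $t$. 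Budget tightness is obtained by inserting extra mandatory deliveries, e.g.\ one leaf tile at every $v\in V(G)$ that must be relocated to a shared sink, so that any feasible plan visits every vertex and the lower bound $n-1$ on walking cost is attainable only via Hamiltonian traversal.

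Correctness then splits into two directions. \emph{Soundness:} from a Hamiltonian $s$--$t$ path $v_1=s,v_2,\ldots,v_n=t$ one constructs a reconfiguration sequence that walks along the path while picking up each leaf tile in order, giving total cost exactly $k$. \emph{Completeness:} any feasible sequence of length $\le k$ must, by the tight budget, visit every vertex of $G$; the connectivity invariant (the configuration minus the carried tile stays connected) together with the restriction that movement happens only on the remaining tiles forces the vertex visit sequence of the transport phase to be a simple path, which is therefore Hamiltonian from $s$ to $t$. The construction is clearly computable in polynomial time.

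The main obstacle is the completeness argument: one must rule out ``cheating'' strategies in which the robot temporarily drops $x$ to reshape the configuration, repositions leaf tiles to create shortcuts, or interleaves deliveries so as to avoid visiting some vertex of $G$. This requires carefully placing the auxiliary tiles so that any such detour either violates connectivity of the residual polyomino, or strictly increases the total walking distance above the budget. Getting these gadgets right --- in particular guaranteeing that the leaf-tile deliveries cannot be amortized or batched in a way that breaks the correspondence with a Hamiltonian path --- is where the technical care of the proof lies.
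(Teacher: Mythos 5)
Your reduction source (\textsc{Hamiltonian path} in grid graphs, via Itai et al.) and the overall shape of the argument match the paper's, but two design choices in your gadgetry create genuine gaps. The fatal one is the ``one leaf tile per vertex, delivered to a shared sink'' device. The robot carries a single tile at a time, so it cannot ``walk along the path while picking up each leaf tile'' while it is already carrying the courier tile $x$; each leaf delivery is necessarily a separate pick-up--walk--drop--return excursion whose cost is governed by $\mathrm{dist}(v,\mathrm{sink})$. Summed over all vertices, these excursions contribute a term that depends only on the geometry of $G$ relative to the sink, not on whether a Hamiltonian path exists, and it swamps the $n-1$ walking budget you want to make tight. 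The paper avoids this exactly by making all mandatory per-vertex work \emph{local}: each vertex is surrounded by a gadget with four task tiles that must be moved to task positions a constant distance away \emph{inside the same gadget}, so every gadget costs the same fixed number of moves $t_g$ regardless of visit order, and the only tour-dependent cost is the inter-vertex travel. That separation is what makes the budget $(s+7)\cdot(|V|-1)+t_g\cdot|V|$ certify a Hamiltonian path.

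The second gap is the anti-cheating argument you correctly identify as the crux but do not resolve. At unit scale, two vertices at grid distance $2$ can be bridged by placing a single tile, which can be cheaper than detouring along edges of $G$, so your ``completeness'' direction fails as stated. (Your appeal to the connectivity invariant to force the transport walk to be a \emph{simple} path also does not hold --- nothing prevents revisits; the paper's argument is purely a move count: revisiting a vertex or building a bridge costs at least $s+7$ extra moves.) The paper's fix is the scale factor $s$: edges are realized as paths of $s$ edge tiles and gadgets occupy $7\times 7$ neighborhoods, so constructing any new bridge between vertex gadgets requires relocating more than $s+7$ tiles' worth of moves, strictly exceeding the cost of traveling along an existing edge. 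You would need to import both the local-task idea and the scaling to make your construction go through.
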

\begin{proof}
    For an illustration of our construction, we refer to~\cref{fig:np-hardness}. Given a grid graph $G=(V,E)$, we construct a start and a goal configuration as follows.
    
    We begin by creating an arbitrarily scaled embedding of the vertex positions in $V$ in the grid.
    To achieve this, we place \emph{vertex tiles} (black) in identical locations in both the start and goal configuration, such that two adjacent vertices $u,v\in V$ will be exactly $s+7$ units apart in the configurations,
    as we combine a scale factor $s\in\mathbb{N}_+$ with a requirement of $7\times7$ units of space around each vertex tile for our construction.
    
    \begin{figure}[tb]
        \centering
        \def\svgwidth{\columnwidth}
        \import{./figures/}{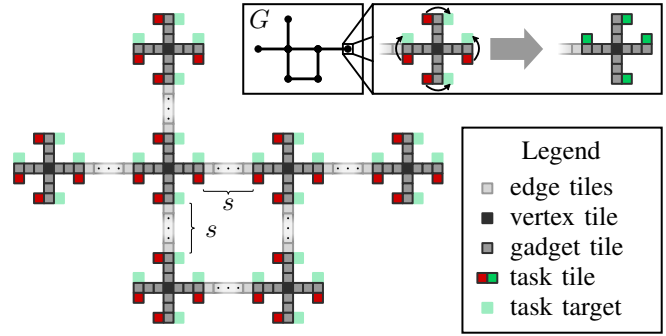}
        \caption{Symbolic overview of the construction used to show \NP-hardness of the \textsc{\BILLE reconfiguration} problem, see proof of~\cref{thm:bille-reconf-np-complete}.}
        \label{fig:np-hardness}
        \vspace{-1em}
    \end{figure}
    
    Each of the vertex tiles is then surrounded by a so-called \emph{vertex gadget}, which contains four \emph{task tiles} and \emph{task positions} (red and green) arranged around a $7\times 7$ cross of auxiliary \emph{gadget tiles} (dark gray), centered on the vertex tile itself.
    A task tile or position in this construction represents a location that is occupied exclusively in either the start or the goal configuration, respectively.
    We say that a reconfiguration sequence \emph{solves} a vertex gadget exactly if it picks up each task tile and places it at one of the nearby task positions.
    Therefore, a reconfiguration sequence that transforms the start into the goal configuration must solve each gadget.
    Finally, we connect every pair of gadgets that correspond to adjacent vertices in $G$ by a straight path of $s$ many so-called \emph{edge tiles} (light gray), which are identical in the start and goal configurations.
    
    Let $t_g$ now refer to the number of moves required to solve one vertex gadget.
    By construction, this value is equal for all gadgets.
    A reconfiguration sequence that solves all vertex gadgets therefore takes at least $t_g\cdot |V|$ moves, not accounting for travel between vertices.
    As this takes $(s+7)$ moves per vertex by construction, we conclude that there exists a sequence consisting of exactly $(s+7)\cdot (|V|-1)+ t_g\cdot |V|$ moves, if there is a Hamiltonian path in $G$.
    In this case, no reconfiguration sequence could ever be faster, as building bridges between any two vertex tiles would take more than~$(s+7)$ moves, therefore resulting in a more expensive reconfiguration sequence than on a Hamiltonian path.
    
    If no Hamiltonian path exists, a shorter reconfiguration sequence that travels solely along the existing edges of the underlying graph would visit at least one vertex tile twice, implying a longer sequence.
    If the robot were to construct at least one additional bridge between vertices, this would immediately imply that it spent at least $(s+7)$ moves constructing a bridge in addition to the $(s+7)\cdot (|V|-1)+ t_g\cdot |V|$ moves it would have to spend solving the instance.
\end{proof}
By creating $m$ copies of the resulting configurations (one for each robot), placing these pairwise sufficiently far apart, and connecting them by a single line of tiles (to create a connected configuration),~\cref{cor:multi-robot-hardness} is straightforward.
\begin{corollary}\label{cor:multi-robot-hardness}
    \textsc{Multi-robot \BILLE reconfiguration} with $m$ robots is \NP-complete for every $m\in \mathbb{N}$.
\end{corollary}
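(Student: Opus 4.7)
My plan is to reduce from the single-robot \textsc{\BILLE reconfiguration} problem, which is \NP-complete by~\cref{thm:bille-reconf-np-complete}, using the hint sketched in the text: given an instance $I$ of the single-robot problem with target sequence length $L$, construct a multi-robot instance $I'$ consisting of $m$ disjoint copies $C_1,\dots,C_m$ of $I$, connected by a single thin ``highway'' of tiles that is identical in both start and goal configurations. Each copy is offset by a translation large enough that the geodesic distance between any two distinct copies, measured along the connector, is some value $D\gg L$. The target sequence length in the multi-robot decision problem is set to $L' := mL$ (or $L$, depending on whether total work or makespan is being counted, which I would make explicit before reducing), and the $m$ robots are initially placed one per copy near their respective task tiles.

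First I would dispatch membership in \NP: a certificate consists of the per-robot move sequences, and each step is locally checkable in polynomial time exactly as in the Observation preceding~\cref{thm:bille-reconf-np-complete}; the validity of connectivity is a polynomial-time graph check at each step, so the cooperative variant also lies in~\NP. Next, for the ``yes'' direction of the reduction, if $I$ admits a sequence of length $L$, then each of the $m$ robots executes that sequence on its own copy in parallel (or sequentially if we measure total work), giving a sequence of the required length for $I'$. Each copy is self-contained, so the moves do not interfere and connectivity is preserved since the highway and the untouched portions of each copy remain intact.

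The substantive direction is the ``no'' part: I need to show that if $I'$ admits a sequence of length $L'$, then $I$ admits a sequence of length at most $L$. The key observation is that each copy $C_j$ contains task tiles that must be picked up and task positions that must be filled in $C_j$ alone (the copies are pairwise disjoint in their task structure), so the total number of pick/place operations in $C_j$ is a fixed constant $t_g\cdot|V|$ from the single-robot construction. Hence some robot must visit each copy. By choosing the inter-copy distance $D$ large enough, specifically $D> L$, any robot that services more than one copy incurs at least $D$ additional travel moves, pushing the total above $L' = mL$. Therefore, in an optimal sequence each of the $m$ robots services exactly one copy, and the sequence restricted to a single copy is a feasible single-robot reconfiguration of $I$ of length at most $L$.

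The main obstacle I anticipate is bookkeeping around connectivity and around the precise cost model. The connector must be wide/thin enough that removing a tile from one copy never disconnects another copy, and it must not offer a shortcut that shaves off moves within a single copy; restricting the connector to a straight path of ``edge tiles'' identical in start and goal (as in the proof of~\cref{thm:bille-reconf-np-complete}) handles both. The second subtlety is whether the problem measures total work or makespan: under total work the above argument is immediate, whereas under makespan one additionally needs that no robot can ``help'' another finish faster, which again follows from the disjoint-task-structure argument once $D$ is chosen large enough to make cross-copy assistance more expensive than any savings it could yield. Once these parameters are pinned down, the reduction is polynomial-time and the corollary follows for every fixed $m\in\mathbb{N}$.
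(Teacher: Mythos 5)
Your proposal is correct and follows essentially the same route as the paper, which establishes~\cref{cor:multi-robot-hardness} by exactly this construction: $m$ copies of the single-robot instance placed sufficiently far apart and joined by a single line of tiles, so that no robot can profitably service more than one copy. Your write-up simply fills in the details (the distance threshold $D>L$, the \NP{} membership, and the work-versus-makespan distinction) that the paper leaves as "straightforward."
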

Clearly, every \textsc{Multi-robot \BILLE reconfiguration} instance is an instance of the \textsc{Cooperative \BILLE reconfiguration} problem as well, meaning that~\Cref{cor:multi-robot-hardness} explicitly covers cooperative reconfiguration.
However, in our reduction, cooperation between robots is not necessary but rather sub-optimal. 
By adding a simple gadget to the above-mentioned construction, we are able to force cooperation and thus provide the following explicit result.

\begin{theorem}\label{thm:cooperative-hardness}
    The \textsc{Cooperative \BILLE reconfiguration} is \NP-complete even for two robots.
\end{theorem}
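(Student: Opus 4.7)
The plan is to augment the construction from the proof of \Cref{thm:bille-reconf-np-complete} with a small cooperation gadget that is solvable only when both robots genuinely collaborate via a tile handoff, while leaving the rest of the reduction from \textsc{Hamiltonian path} intact. I would start from the single-robot hardness construction for a grid graph $G$ and place the first robot $R_1$ at the entry vertex tile. Then I would attach a cooperation gadget $\mathcal{C}$ to the outside of the main construction via a short, fixed-length corridor. The gadget contains a distinguished source tile $\sigma$ (present in the start, absent in the goal) and its target position $\tau$ (absent in the start, present in the goal), arranged so that every path by which a single loaded robot could transport $\sigma$ to $\tau$ necessarily passes through a cell whose removal is a cut of the tile adjacency graph, thereby violating the connectivity constraint. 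The second robot $R_2$ is initially placed inside a confined sub-region of $\mathcal{C}$ that it can only leave after picking up $\sigma$ at a designated handoff cell.

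The geometry is chosen so that the unique feasible way to complete the gadget is a three-step relay: $R_1$ picks up $\sigma$, drops it at a handoff cell $h$ that it can reach without breaking connectivity, and $R_2$ then picks it up and places it at $\tau$. This matches exactly the cooperation mechanism discussed in \Cref{subsec:automated-reconfiguration}, where tiles can only be exchanged via intermediate placement. The total cost $t_c$ of completing the cooperation gadget is then a constant independent of $G$.

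Next, I would argue that $R_2$'s confinement (both before it obtains $\sigma$ and after it places it at $\tau$) prevents it from ever visiting or altering any tile of the main construction; all vertex gadgets and edge tiles must therefore be resolved by $R_1$ alone. The accounting from the proof of \Cref{thm:bille-reconf-np-complete} carries over verbatim up to an additive shift of $t_c$: a reconfiguration sequence of length exactly $(s+7)(|V|-1) + t_g\cdot |V| + t_c$ exists if and only if $G$ has a Hamiltonian path. Membership in \NP{} is inherited from the Observation preceding \Cref{thm:bille-reconf-np-complete}, so combining this with hardness yields \NP-completeness for two cooperating robots, as claimed.

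The main obstacle will be the gadget design itself. It must simultaneously (i) be provably infeasible for a single robot because of the connectivity constraint, (ii) admit a cooperative solution of exactly the advertised constant cost $t_c$, and (iii) confine $R_2$ tightly enough that it cannot aid $R_1$ inside the Hamiltonian-encoding part of the construction, since any such help could parallelize the vertex traversal and break the correspondence with Hamiltonian paths of $G$. Step (i) will require a careful cut-vertex analysis of the tile graph across all reachable intermediate configurations, and this case distinction is where I expect the bulk of the technical work to lie.
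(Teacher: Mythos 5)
Your high-level plan (augment the single-robot construction with a constant-size gadget that forces the two robots to cooperate) matches the paper's intent, but the mechanism you propose for forcing cooperation does not work in this model, and the accounting you inherit targets the wrong objective. You want the gadget to be \emph{infeasible} for a single robot ``because of the connectivity constraint.'' In this model a robot walking over tiles does not remove them, the structure stays connected throughout, and the paper explicitly notes that a robot can therefore reach every tile and that a feasible reconfiguration sequence always exists whenever start and goal lie in the same free-space component. So no placement of $\sigma$ and $\tau$ can be made unreachable for a single (possibly loaded) robot, and likewise $R_2$ cannot be ``confined'' to a sub-region of a connected structure except by being physically blocked by $R_1$ --- a mechanism you do not construct. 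Cooperation can only be forced through \emph{cost}, not feasibility, which is exactly what the paper does: its gadget contains one large task (move a tile eight units) and six small ones, and splitting the large task between the two robots yields a $25$-move schedule, whereas any schedule in which one robot takes on more than its share must reach and cross a bridge and pays at least $30$ moves.

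Second, the paper argues the cooperative variant in terms of \emph{makespan} (parallel time steps), not the sum of moves, so your ``additive shift by $t_c$'' does not carry the single-robot accounting over. The paper instead keeps \emph{both} robots busy: one traverses the actual \textsc{Hamiltonian path} instance, the other traverses a fixed string of $|V|$ equally expensive task groups laid out along a path, so that the two can arrive at the cooperation gadget simultaneously --- and hence achieve the optimal makespan --- if and only if the input graph has a Hamiltonian path. Without some such synchronization device, an optimal two-robot schedule would simply send $R_2$ into the main construction to parallelize the vertex gadgets, which is precisely the failure mode you flag in your point (iii) but do not resolve.
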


\begin{proof}
For the following we refer to~\cref{fig:cooperative-primitive}.
\begin{figure}[tb]
    \centering
    \vspace{1em}
    \def\svgwidth{.8\columnwidth}
    \import{./figures/}{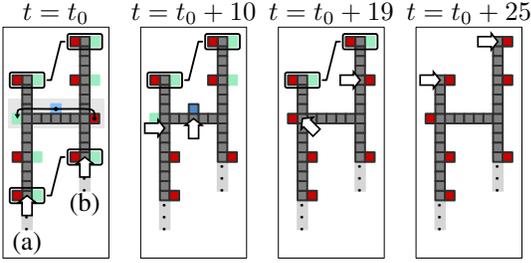}
    \caption{The gadget that is used in the proof of~\cref{thm:cooperative-hardness}, showing that the cooperative variant of the problem is NP-hard.}
    \label{fig:cooperative-primitive}
    \vspace{-1.5em}
\end{figure}
We want the robots to enter this part of the construction simultaneously at time step $t_0$ at the positions labeled by arrows in the left figure.
Below (a) we place an instance of the construction given in the proof of~\cref{thm:bille-reconf-np-complete}, with a total of $|V|$ vertices and scale $s$.
Attached to (b) is an arbitrarily arranged string of $|V|$ task groups connected by edges of length $s$, such that an optimal schedule solves the gadgets in order of their position on the path.
Hence, the robots can arrive at the labeled positions ($t=t_0$) simultaneously, if and only if a Hamiltonian path exists in the input instance of~(a), implying an optimal schedule as shown in the proof of~\cref{thm:bille-reconf-np-complete}.

The cooperative gadget consists of one large and six small tasks, each requiring a tile to be moved by eight units or two units, respectively.
We demonstrate a schedule which can solve this gadget in $25$ moves~(see~\cref{fig:cooperative-primitive} left to right) by having each robot solve the three small tasks on the side they come in on, while splitting the large task into equal halves (indicated by the blue square).
If a shorter schedule were to exist, it would require a different assignment of tasks to robots.
If one robot were to solve four small tasks at two moves each, this would require it to travel four units in between tasks, in addition to crossing the bridge at least once.
Reaching and crossing the bridge costs an additional $10$ moves, resulting in a total makespan of at least $30$.
Note that picking up tiles and taking them away from their local tasks cannot accelerate this, as the number of moves required per task remains the same (or increases).
\end{proof}

\section{Methods}\label{sec:Methods}

\definecolor{myBlueRed}{RGB}{129,131,251}
\definecolor{myPink}{RGB}{252,130,130}
\definecolor{myGreen}{RGB}{132,252,133}
\definecolor{myGray}{RGB}{127,127,127}
\definecolor{myYellow}{RGB}{253,226,134}
\definecolor{myLavendar}{RGB}{225,133,252}
\definecolor{myCyan}{RGB}{133,226,253}

\begin{figure}[t]
\centering
    \adjincludegraphics[width=0.7\columnwidth] {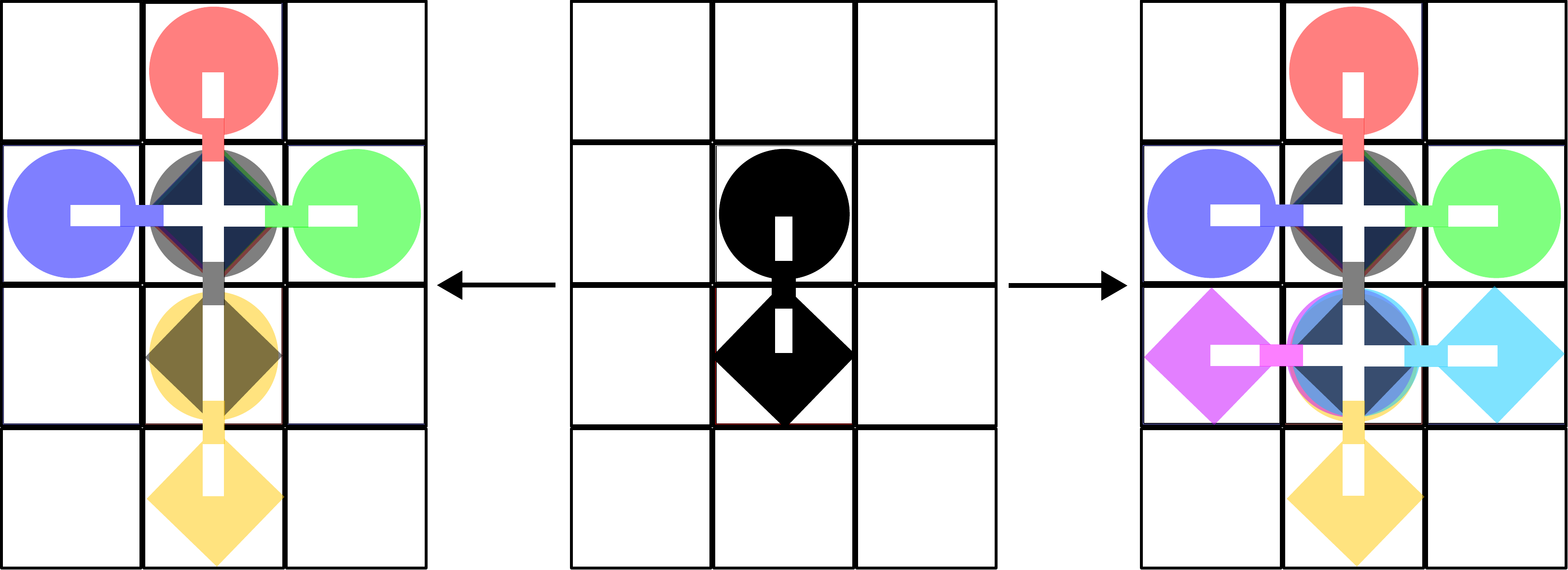}
    \caption{\label{fig:Neighbors}
    Starting from a configuration (middle), \BILLE can reach several configurations after a single motion. For clarity, the front foot is drawn as a circle, and configurations are color coded. (Left) The motions in $S_5$ are 
    \textcolor{myGray}{$\bullet$} waiting, 
    \textcolor{myPink}{$\bullet$} stepping forward, 
    \textcolor{myYellow}{$\bullet$} backward, 
    and placing the front foot one tile to the right
    \textcolor{myGreen}{$\bullet$} or one tile to the left \textcolor{myBlueRed}{$\bullet$} and moving the back foot one tile forward.
     (Right) $S_7$ is expanded to include placing the back foot one tile to the left 
    \textcolor{myLavendar}{$\bullet$} or right
    \textcolor{myCyan}{$\bullet$} and moving the front foot one tile backward. 
    }
\end{figure}

\subsection{Graph representation}

Many problems in path planning can be solved by converting them to graph search problems.
The configurations of the system can be discretized into vertices connected by edges or transitions, based on the basic motions of the robots.
Then, it becomes a matter of finding a valid path between the start and goal vertices along the provided edges.

A popular deterministic search algorithm is A$^*$, employed often due to its completeness and ease of implementation~\cite{lavalle2006planning}. 
Every iteration, A$^*$ expands a vertex by adding its \emph{neighbors} (vertices that can be reached with one transition) to an open list.
The most promising vertex in this list is then expanded and removed from the list.
This is repeated until the goal is reached.
The promise of vertex $v$ is defined as
\begin{equation}\label{eq:og_cost}
 f(v)=g(v)+\varepsilon \cdot h(v).
\end{equation}
Here, $g(v)$ is the cost from the start to $v$, and $h(v)$ is a heuristic \revision{(both admissible and consistent)} to estimate the cost from $v$ to the goal.
Scaling the heuristic by $\varepsilon >1$ speeds up the pathfinding process, at the cost of optimality~\cite{pohl1970first}. 

A significant drawback of such a  graph representation, however, is that the number of neighbors each vertex has rapidly increases for systems with more degrees of freedom, making it impractical for systems with large numbers of robots.
To deal with this, the M$^*$ variant initially only considers the \emph{limited neighbors} of vertices when expanding them~\cite{Choset2011Mstar}.
This approach is equivalent to planning the paths of the robots independently.
If a collision results, all vertices leading to it are updated with a collision set, and re-added to the open list.
When expanding vertices with collision sets, the paths of the colliding robots are planned jointly.
Notably, M$^*$ is most useful in situations when only a subset of robots are in danger of colliding on their optimal paths to their goals. 
On crowded maps, where the majority of vertices have collision sets, it reverts to regular A$^*$.

In order to employ graph search methods, we must now precisely define the \BILLE configurations and the feasible transitions for the graph representation.
Each vertex $v$ represents a configuration by a tuple of the coordinates of the robot's feet and of the carried tile, if applicable.
Neighbors of $v_i$ are all the configurations that the \BILLE can reach after one edge transition starting from the respective vertex.
All edge transitions depend on $S$, the robot's basic motions.

One advantage of using a graph search approach is that changes in hardware can be easily accounted for by changing~$S$. 
For example, our current \BILLE has a rotation limit of about 180$^\circ$ on its feet. 
This means that if we want one foot to be able to rotate the robot around, it can only do so in one direction. 
While our \revision{robots} are homogeneous, a heterogeneous \revision{group with different movement capabilites} can be handled by suitably modifying~$S$. 
Changes in~$S$ considerably impact the ability to find a solution.

Two sets of $S$, one with five motions and the other with seven, for a robot not carrying a tile are shown in~\cref{fig:Neighbors}. 
Although a larger $S$ means that the robots are capable of moving into more configurations, it also increases the time spent creating and searching the graph. 
Hence it can be beneficial to limit the number of basic motions.

\subsection{Temporal A$^*$}

We implement a temporal version of A$^*$ to handle the multi-robot problem with a time-varying structure. We use a \emph{time horizon} $H$ and a \emph{priority queue} $Q={1,2,...,m}$ as the de-coupling methods to reduce the complexity of the problem. The planner determines the next $H$ moves of each robot in order according to $Q$. Similar to~\cite{Peng2015Dstar} and \cite{silver2005cooperative}, when path planning for one robot, the other robots are treated as obstacles. 
Higher priority robots, those with IDs appearing first in $Q$, only consider the positions of lower priority robots at $H=1$. 
If $H>1$, the lower priority robots are assumed to move out of the way so their positions are not considered. 
Lower priority robots consider the positions of higher priority robots at the $H$ value they are planning for. If the current $Q$ results in a deadlock, the ID of the lower priority robot is shifted up in $Q$ and the paths are planned again. A higher $H$ value can correct deadlock situations but is more computationally expensive. \revision{It is also possible for a deadlock to be unavoidable for all permutations of $Q$ and any value of $H$, making our Temporal A$^*$ not complete. This is more likely to happen with smaller $S$.}

Time is embedded into the graph by differentiating vertices that describe the same configuration for a given robot but at different values of time $t$. This way, \emph{waiting} can be treated as any other member of $S$. All members of $S$ can be made to take the same amount of time by the central computer. Even if some motions are faster than others, the robots are signaled to wait for all others to be done moving. This synchronous scheme allows us to assign uniform costs to~$S$. Since \emph{waiting} is as costly as moving, the planner is incentivized to try to move robots through different paths to reach their destinations. Having many vertices that lead to the same configurations at different times makes the neighbor sets grow quickly, further limiting the practical values of $H$. 

Collisions are handled at edge transitions. If the transition involves a rotation, the tiles that the robot sweeps over are included in the \emph{collision set} $c$. \revision{If the robot is carrying a tile, $c$ is appropriately expanded.} If $c$ for two robots intersect, then there will be a collision, and the transition is labelled as invalid. In this case the lower priority robot will re-plan as explained before. Visuals of different $c$ are provided in~\cref{fig:CollSets}.

\begin{figure}[tb]
\centering
    \includegraphics[width=0.6\columnwidth] {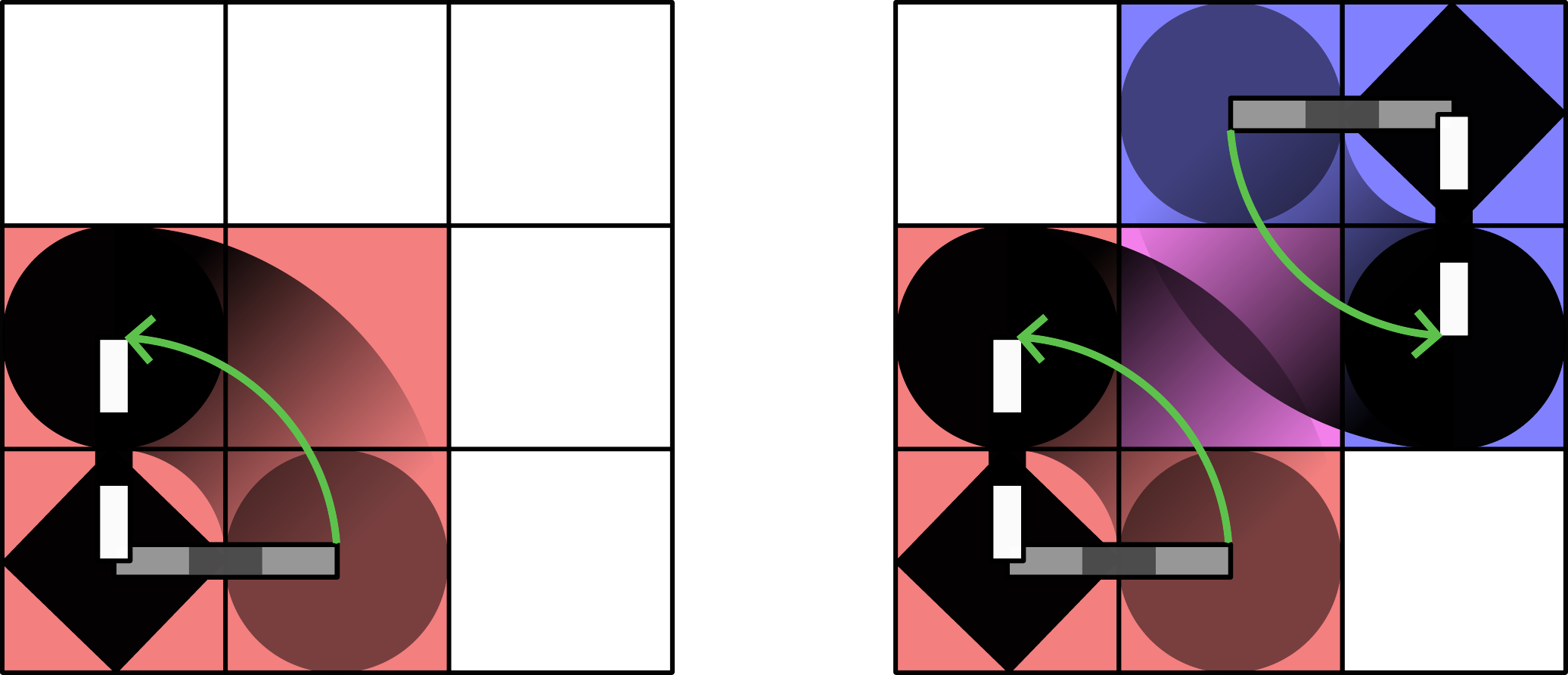}
    \caption{\label{fig:CollSets}
    The collision set of a motion is used to determine validity. (Left)~The \BILLE rotates $90^\circ$ counterclockwise, and the highlighted squares represent the respective collision set. (Right) Example of a collision resulting from moving two \BILLE bots in a certain way.
    }
\end{figure}

\subsection{Load transfer}

\begin{figure}[tb]
\centering
    \def\svgwidth{\columnwidth}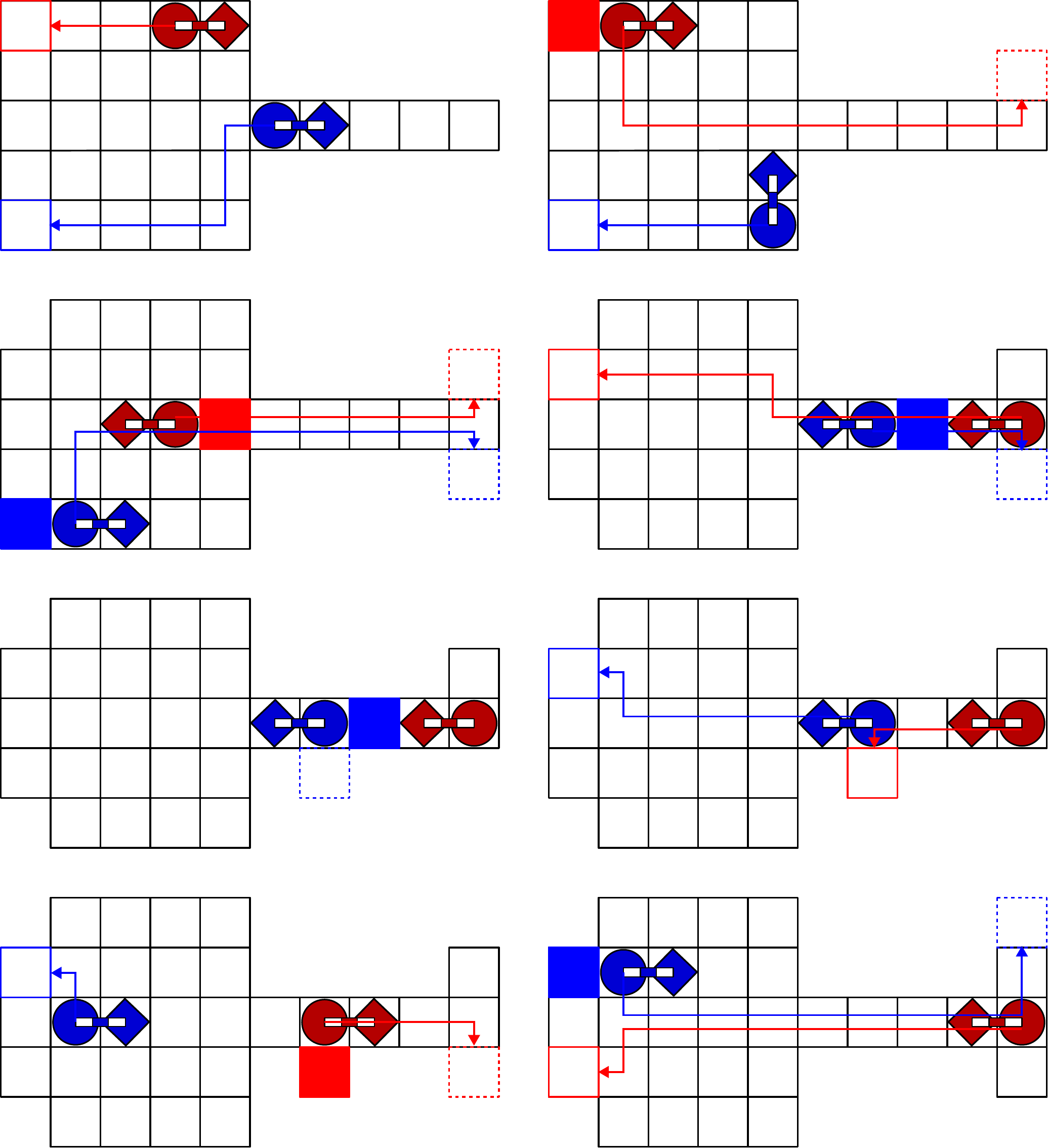
    \caption{\label{fig:LoadTransfer}
    1. Each robot moves to pick up its assigned tile. 2. Red picks up tile \emph{a} and is routed to place it at \emph{A}. 3. Blue picks up tile \emph{b} and is routed to place it at \emph{B}. 4. Red places tile \emph{a} and is routed to pick up tile \emph{c}. 5. The condition for load transfer is met at the new \emph{B\!'}. 6.~Blue places tile \emph{b} and is routed to tile \emph{c} that Red was originally going to pick up, while Red is routed to tile \emph{b}. 7. Red picks up tile \emph{b} and is routed to drop it off. 8. The building continues as normal.
    }
\end{figure}

A key aspect of our automated reconfiguration scheme is the ability of the robots to cooperate by exchanging carried tiles.
This allows robots to avoid collisions and deadlocks by changing targets, usually to pick-up and drop-off locations closer to their current location, reducing the density of paths.

Reducing the number of tiles traveled is the metric used to determine if a load transfer should take place.
The cost notation is as follows: $P_{i,(a,b)}^{(x,y)}$ is the cost for robot $i$ to pick-up a tile at location $(x,y)$ starting from $(a,b)$.
Similarly, $D_{i,(a,b)}^{(x,y)}$ is the cost for the same robot to drop-off a tile.
Two robots $i$ and $j$, one carrying a tile to $(x_1,y_1)$ starting from $(a_1,b_1)$, and the other on its way to pick up a tile at $(x_2,y_2)$ starting from $(a_2,b_2)$, have an original combined cost of
\begin{equation}\label{eq:DP_cost}
 C_{i,j}=D_{i,(a_1,b_1)}^{(x_1,y_1)}+P_{j,(a_2,b_2)}^{(x_2,y_2)}.
\end{equation}

The load transfer process for the \BILLE bots involves finding secondary locations to drop-off and pick-up their loads since the robots cannot exchange tiles directly in their current version. Following the case above, a third location $(x_3,y_3)$, with an adjacent tile $(a_3,b_3)$ is considered for the transfer. If the condition
\begin{equation}\label{eq:cost_comp1}
D_{i,(a_1,b_1)}^{(x_3,y_3)}+P_{i,(a_3,b_3)}^{(x_2,y_2)}+P_{j,(a_2,b_2)}^{(x_3,y_3)}+D_{j,(a_3,b_3)}^{(x_1,y_1)}<C_{i,j}
\end{equation}
is met, then robot $i$ moves to place its tile at $(x_3,y_3)$, where robot $j$ will pick it up later. Both robots then proceed to the other's original targets. 
In the case that both robots are initially carrying a tile, there is no need to switch tiles because the tiles are identical. 
Switching drop-off locations may shorten the required travel, i.e., if
\begin{equation}\label{eq:cost_comp2}
 \begin{aligned}
  D_{i,(a_1,b_1)}^{(x_2,y_2)}+D_{j,(a_2,b_2)}^{(x_1,y_1)}<D_{i,(a_1,b_1)}^{(x_1,y_1)}+D_{j,(a_2,b_2)}^{(x_2,y_2)}.
 \end{aligned}
\end{equation}

Load transfer is particularly useful in situations where a narrow path is present, such as a one-tile width bridge where two robots cannot pass each other.
Without transferring, one robot must wait for the other to get out of the way.
An example of~\eqref{eq:cost_comp1} being met is shown in~\cref{fig:LoadTransfer}. 

Finding candidate locations where load transfer can take place involves checking valid positions where the tile can be placed, usually along the outer edges of the polyomino. 
A~candidate location closer to the robot dropping off the tile is better, since dropping off a tile takes time. \revision{Structure connectivity remains a priority when performing load transfers.} 

Because this strategy can be computationally expensive with larger configurations, we limit the search to pairs of robots that are within a Manhattan distance $d_{tr}$ of each other. In the case of multiple robots wanting to transfer loads with the same robot, the transfer that lowers cost the most is chosen. The planner used can be our Temporal A$^*$, M$^*$, or any other that finds paths to the robots' goals. 

The goal selection is revisited every time step to check for load transfers. The planned paths are saved until the goals of the robots change, to avoid unnecessary computation.

\section{Results}\label{sec:Results}

\subsection{Planner comparison}

We first measure the time required for planning by our Temporal A$^*$ implementation, regular A$^*$, and the M$^*$ variant. 
For five different maps with increasing number of robots, the three planners are tasked with finding paths to predetermined goal positions. 
Tests are conducted for movesets $S_5$ and $S_7$, which are illustrated in~\cref{fig:Neighbors}. 
Each test is given a maximum of one hour to solve. Results are shown in~\cref{fig:CompMaps}.

The maps are intentionally crowded to test the planners in challenging scenarios. From the results, it can be seen that larger values of $\varepsilon$ (a weighted  heuristic~\cite{pohl1970first}) do not substantially decrease the time required by regular A$^*$ or M$^*$ for most maps. Regular A$^*$ takes longer than M$^*$ for all maps, and fails consistently for the tests with $S_7$. On the contrary, Temporal A$^*$ consistently finds solutions much faster  \revision{thanks to the frequent reduction in neighboring nodes expanded}. 
Larger $\varepsilon$ do improve planning times in some cases.
Larger~$H$ values make Temporal A$^*$ take longer, which is expected as it means more steps are planned and thus considered when moving lower priority robots. However, lower values can prevent the planner from finding a solution, as can be seen in maps 1 and 5 with $S_5$, and in map~2 with $S_7$.

The performance of the Temporal~A$^*$ in terms of time steps required for reconfiguration of the maps, as well as the tiles traveled by all the robots, are shown in Table \ref{tab:PerfCompS7} for the $S_7$ case.
For the maps with 4 and 6 agents, Temporal A$^*$ can finish builds faster than both regular A$^*$ and M$^*$. In the other maps it takes slightly longer compared to other planners that found a solution.
In terms of total tiles traveled, Temporal A$^*$ consistently makes the agents travel longer paths as it sacrifices optimality to keep the planning decoupled.
This can be an important consideration for power requirements, for example.
Both of these issues can be partially addressed through the load transferring method.
As mentioned before, the size of $S$ is important for Temporal A$^*$ to find solutions.
It was not able to solve map~2 with $S_5$ for any set of parameters, because the limited moveset makes it impossible for the blue and red robots to let the other through, even if they switch priorities.
$S_7$~includes motions where the robots can turn their back feet first, offering more flexibility for the robots to get out of the way.

Unlike regular A$^*$ and M$^*$, where larger movesets increase planning time by increasing the number of neighbors each vertex has, for Temporal A$^*$ a larger moveset can actually decrease planning times. Its lower planning time performance makes it viable to use first and then switch to another planner like M$^*$ if a solution is not found.

\begin{figure*}[tb]
	\setlength{\abovecaptionskip}{0pt}
	\centering
	\begin{overpic}[width=2\columnwidth]{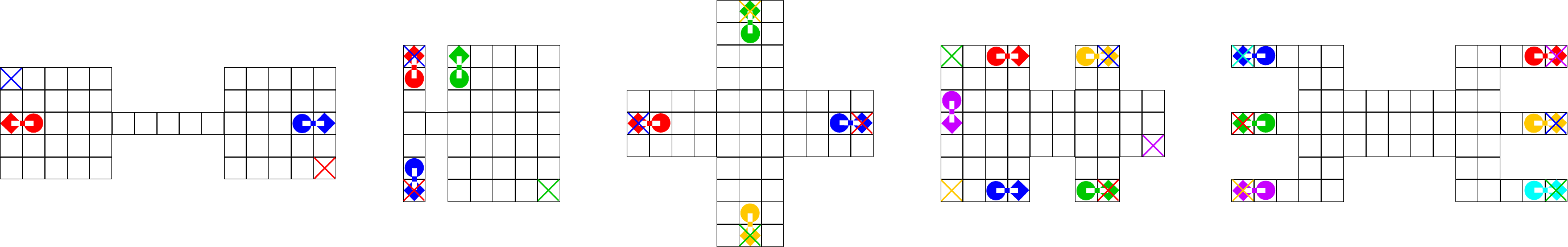}
		\put(0,13.75){\footnotesize 2 agents}
		\put(25.5,13.75){\footnotesize 3 agents}
		\put(38,13.75){\footnotesize 4 agents}
		\put(60,13.75){\footnotesize 5 agents}
		\put(78.5,13.75){\footnotesize 6 agents}
	\end{overpic}
	\begin{overpic}[width=2\columnwidth] {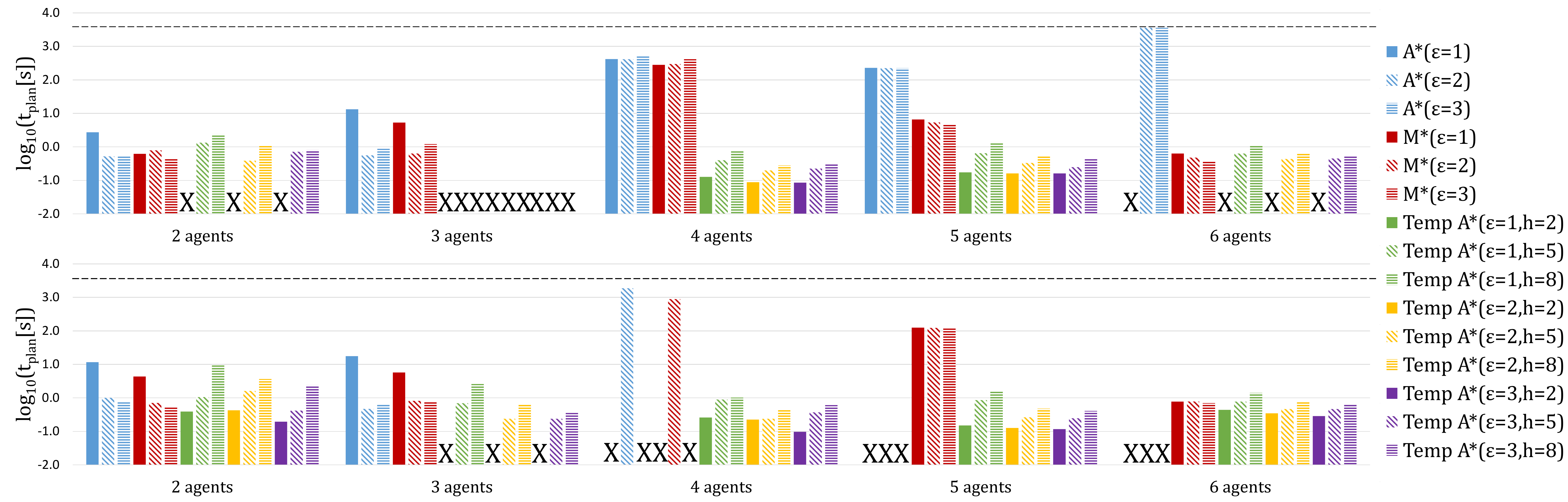}
		\put(-1.0,31){(a)}
		\put(-1.0,24){\footnotesize$S_5$}
		\put(-1.0,15){(b)}
		\put(-1.0,8){\footnotesize$S_7$}
	\end{overpic}
	\caption{\label{fig:CompMaps}
		Planning time $t_{\text{\emph{plan}}}$ for A$^*$, M$^*$, and our Temporal A$^*$ on crowded maps with increasing number of robots, shown at the top. The planners must move the robots' front feet (the circles) to the respective colored $\times$, the back foot can end up anywhere as long as it is a valid position. Results with $S_5$ in (a)  and  $S_7$ in (b). An X indicates the planner was unable to find a solution within the time limit (dashed line). Plots are shown in log scale.
	}
\end{figure*}

\begin{table*}[b]
\caption{Reconfiguration performance ($S_7$)}
\begin{center}
\begin{tabular}{|c|c|c|c|c|c|c|c|c|c|c|}
\hline
& \multicolumn{5}{c|}{Time steps} & \multicolumn{5}{c|}{Tiles traveled}\\
\hline
& \makecell{$m=2$} & \makecell{$m=3$} & \makecell{$m=4$} & \makecell{$m=5$} & \makecell{$m=6$} & \makecell{$m=2$} & \makecell{$m=3$} & \makecell{$m=4$} & \makecell{$m=5$} & \makecell{$m=6$}\\
\hline
\makecell{A$^*$($\epsilon=1$)} & \makecell{23} & \makecell{10} & \makecell{N/A} & \makecell{N/A} & \makecell{N/A} & \makecell{36} & \makecell{25} & \makecell{N/A} & \makecell{N/A} & \makecell{N/A}\\
\hline
\makecell{A$^*$($\epsilon=2$)} & \makecell{23} & \makecell{17} & \makecell{24} & \makecell{N/A} & \makecell{N/A} & \makecell{36} & \makecell{25} & \makecell{46} & \makecell{N/A} & \makecell{N/A}\\
\hline
\makecell{A$^*$($\epsilon=3$)} & \makecell{23} & \makecell{17} & \makecell{N/A} & \makecell{N/A} & \makecell{N/A} & \makecell{36} & \makecell{25} & \makecell{N/A} & \makecell{N/A} & \makecell{N/A}\\
\hline
\makecell{M$^*$($\epsilon=1$)} & \makecell{23} & \makecell{10} & \makecell{N/A} & \makecell{13} & \makecell{28} & \makecell{36} & \makecell{25} & \makecell{N/A} & \makecell{56} & \makecell{104}\\
\hline
\makecell{M$^*$($\epsilon=2$)} & \makecell{23} & \makecell{17} & \makecell{24} & \makecell{13} & \makecell{28} & \makecell{36} & \makecell{25} & \makecell{46} & \makecell{56} & \makecell{106}\\
\hline
\makecell{M$^*$($\epsilon=3$)} & \makecell{23} & \makecell{17} & \makecell{N/A} & \makecell{13} & \makecell{28} & \makecell{36} & \makecell{25} & \makecell{N/A} & \makecell{56} & \makecell{106}\\
\hline
\makecell{Temp A$^*$($\epsilon=1,h=2$)} & \makecell{30} & \makecell{N/A} & \makecell{14} & \makecell{17} & \makecell{28} & \makecell{59} & \makecell{N/A} & \makecell{50} & \makecell{80} & \makecell{165}\\
\hline
\makecell{Temp A$^*$($\epsilon=1,h=5$)} & \makecell{24} & \makecell{11} & \makecell{13} & \makecell{14} & \makecell{22} & \makecell{45} & \makecell{28} & \makecell{49} & \makecell{68} & \makecell{130}\\
\hline
\makecell{Temp A$^*$($\epsilon=1,h=8$)} & \makecell{24} & \makecell{11} & \makecell{13} & \makecell{14} & \makecell{22} & \makecell{45} & \makecell{28} & \makecell{49} & \makecell{68} & \makecell{130}\\
\hline
\makecell{Temp A$^*$($\epsilon=2,h=2$)} & \makecell{30} & \makecell{N/A} & \makecell{14} & \makecell{17} & \makecell{28} & \makecell{59} & \makecell{N/A} & \makecell{50} & \makecell{80} & \makecell{165}\\
\hline
\makecell{Temp A$^*$($\epsilon=2,h=5$)} & \makecell{24} & \makecell{11} & \makecell{13} & \makecell{14} & \makecell{22} & \makecell{45} & \makecell{28} & \makecell{49} & \makecell{68} & \makecell{130}\\
\hline
\makecell{Temp A$^*$($\epsilon=2,h=8$)} & \makecell{24} & \makecell{11} & \makecell{13} & \makecell{14} & \makecell{22} & \makecell{45} & \makecell{28} & \makecell{49} & \makecell{68} & \makecell{130}\\
\hline
\makecell{Temp A$^*$($\epsilon=3,h=2$)} & \makecell{30} & \makecell{N/A} & \makecell{14} & \makecell{17} & \makecell{28} & \makecell{59} & \makecell{N/A} & \makecell{50} & \makecell{80} & \makecell{165}\\
\hline
\makecell{Temp A$^*$($\epsilon=3,h=5$)} & \makecell{24} & \makecell{11} & \makecell{14} & \makecell{14} & \makecell{22} & \makecell{45} & \makecell{28} & \makecell{50} & \makecell{68} & \makecell{130}\\
\hline
\makecell{Temp A$^*$($\epsilon=3,h=8$)} & \makecell{24} & \makecell{11} & \makecell{14} & \makecell{14} & \makecell{22} & \makecell{45} & \makecell{28} & \makecell{52} & \makecell{68} & \makecell{103}\\
\hline
\end{tabular}
\label{tab:PerfCompS7}
\end{center}
\end{table*}

\subsection{Load transfer}

Load transfer often shortens total tiles traveled by the robots and time steps required for building, as illustrated in~\cref{fig:LoadTransferComps}. Different maps were created to perform a building sequence with and without load transfer enabled. These maps contain at least one long, single tile segment where the probability for deadlocks is high. Maps 1 to 4 have two robots, and maps 5 to 8 have three. Four tiles are moved in all maps with the exception of maps 3 and 6, where three and ten tiles are moved.

\begin{figure}[tb]
\setlength{\abovecaptionskip}{0pt}
\centering
    \begin{overpic}[width=\columnwidth] {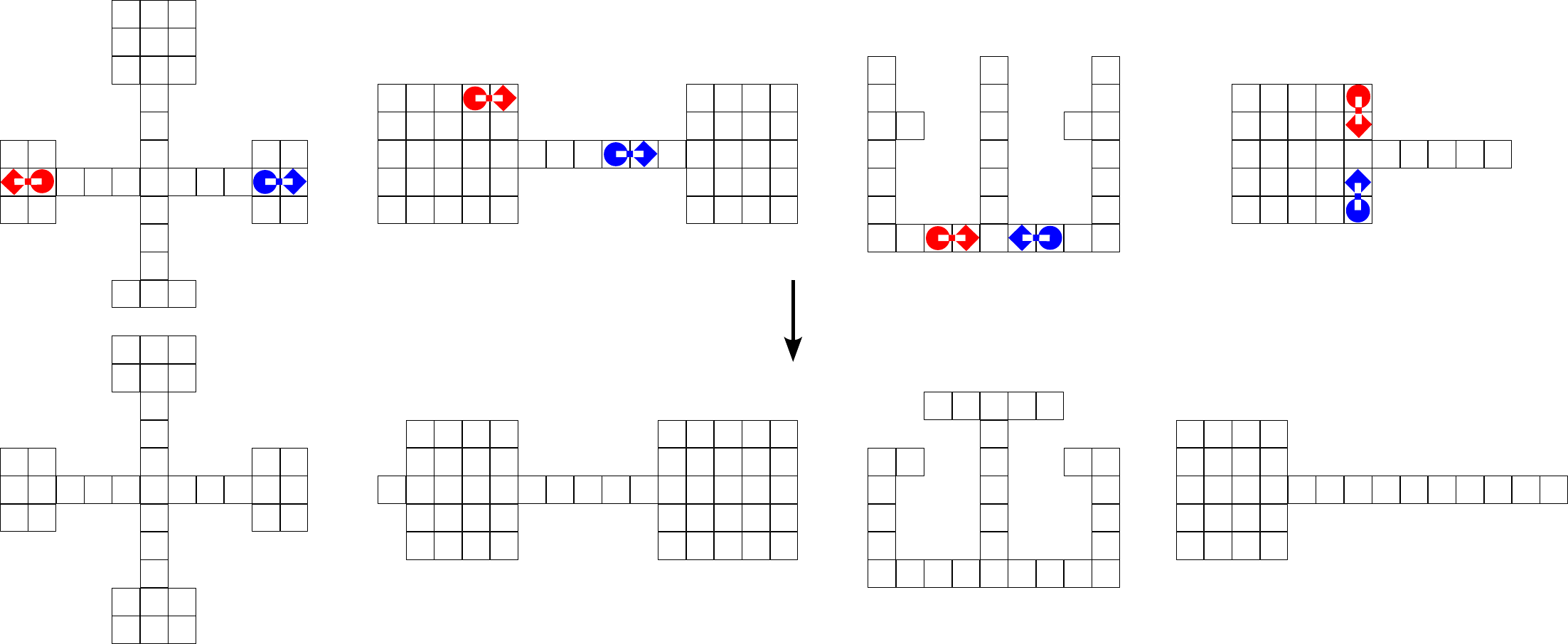}
         \put(13,38){\footnotesize Map 1}
         \put(25,38){\footnotesize Map 2}
         \put(55,38){\footnotesize Map 3}
         \put(75,38){\footnotesize Map 4}
    \end{overpic}
    \vspace{.1em}
    \adjincludegraphics[width=\columnwidth,trim={{0.1\width} {0.84\width} {0.18\width} {0.12\width}},clip] {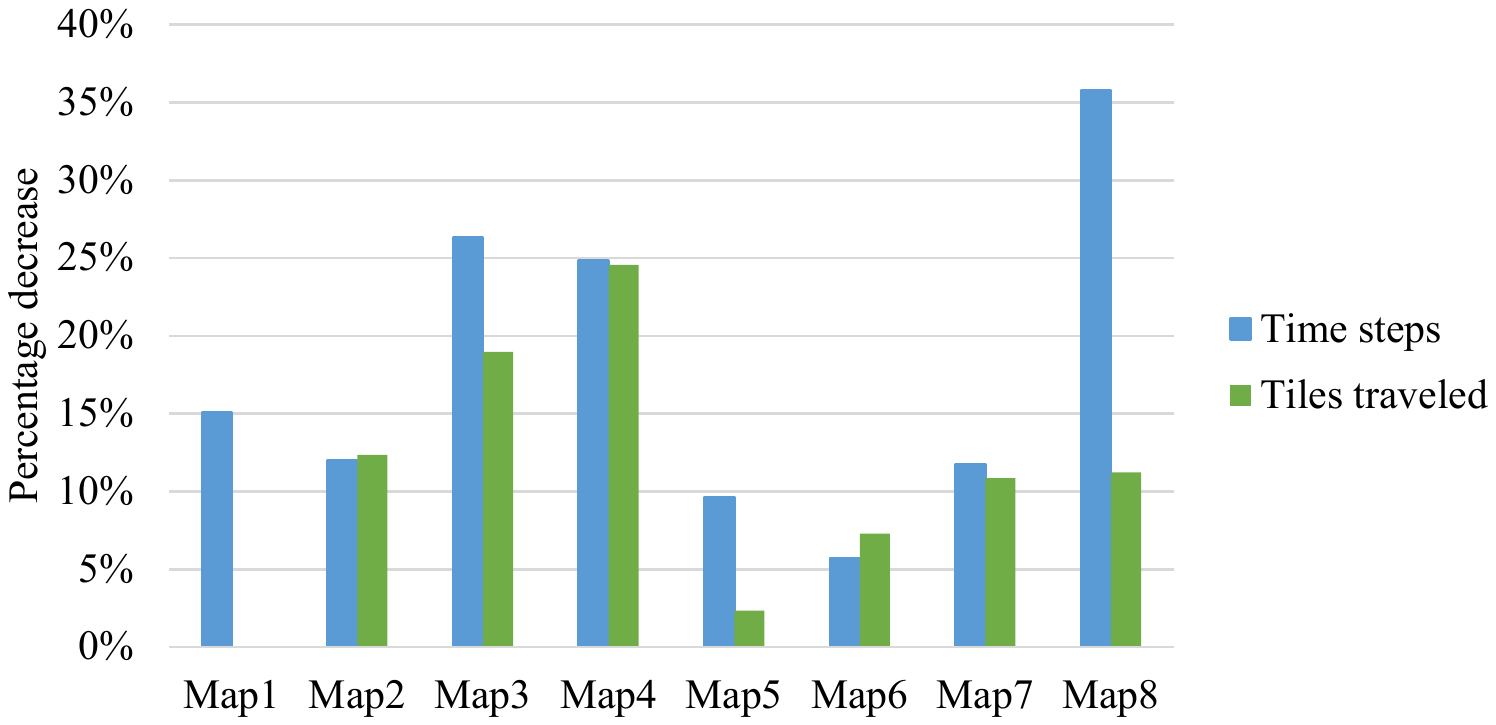}
    
    \begin{overpic}[width=\columnwidth] {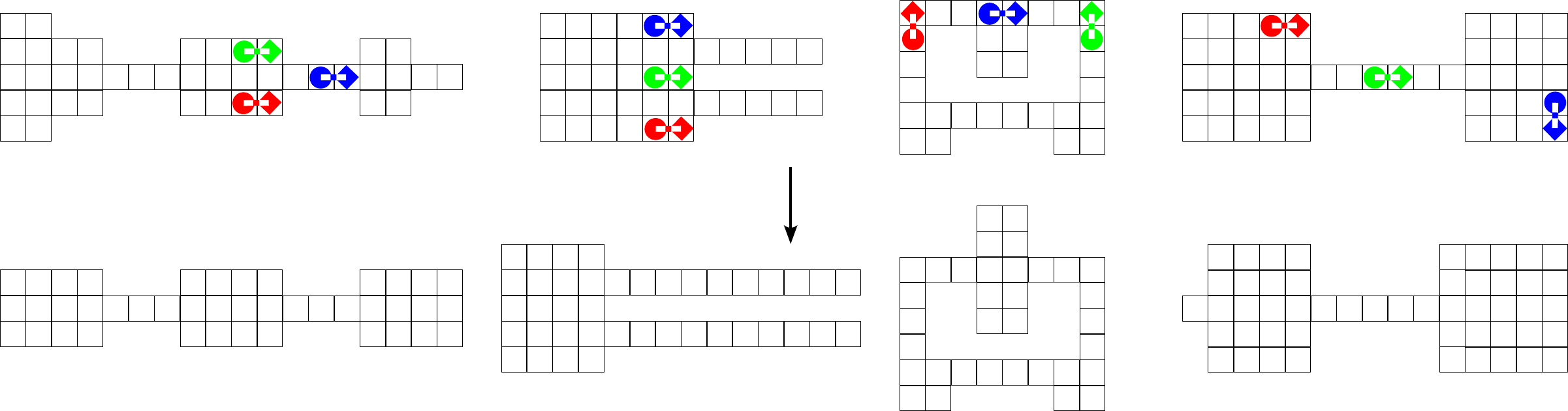}
         \put(0,-4){\footnotesize Map 5}
         \put(32,-4){\footnotesize Map 6}
         \put(57,-4){\footnotesize Map 7}
         \put(77,-4){\footnotesize Map 8}
    \end{overpic}
    \caption{\label{fig:LoadTransferComps}
    The load transfer strategy is tested on several maps. The initial configurations are shown on top and the final ones below them (the final positions of the robots do not matter as long as they are valid). The percentage decrease in time steps and tiles traveled, when employing load transferring, is shown in the plot. 
    }
\end{figure}

In these tests, our Temporal A$^*$ is used as the planner, with $\varepsilon=1$ and $h=10$, and $d_{tr}=5$. Load transferring's impact is how much it decreases the required time steps and total tiles traveled, compared to not employing it. 

The results for map 1 are interesting in that only the time steps were reduced. The same number of tiles were traveled but the robots did not wait as much for paths to be cleared. For the rest of the maps both metrics were reduced, significantly so for maps 3, 4 and 8. 

Although all maps used in these tests allow the building sequence to be completed without cooperation, there are many configurations that require cooperation to be completed, i.e. it is impossible to avoid deadlocks otherwise.

\section{Conclusions and Future Work}\label{sec:Conclusion}

We showed that several variants of the \textsc{\BILLE reconfiguration} problem are \NP-complete. 
In particular, the cooperative variant is proven to be hard, even for two robots. 
We compared three planners, and tested a load transfer strategy to reduce planning time and building time. 
The three planners are the regular A$^*$ algorithm, a multi-robot variant~M$^*$, and a priority-based Temporal A$^*$. 

The Temporal A$^*$ is faster at planning paths, and can benefit from larger configuration spaces unlike the other two. 
However, it is not complete and depends on the size of the moveset.
In some maps it finished the reconfiguration faster, although it usually results in a larger combined path cost (tiles traveled).
While the load transfer can significantly reduce travel costs and time required, it can be computationally expensive so it is limited to pairs of robots within a predefined distance of each other. 
Overall, we were able to plan paths and perform building sequences more efficiently utilizing both methods discussed.

Future work could focus on combining the temporal A* planning with the randomized sampling methods presented in our prior work, to further resolve deadlock situations in complex and crowded maps. \revision{We are also interested in extending these algorithms to 3D structures.}
Finally, it might be worth considering a transfer to distributed computation of local motion plans, as indicated in~\Cref{subsec:multi-robot-planning}.

\newpage
\bibliographystyle{IEEEtranDOI}
\bibliography{IEEEabrv,biblio.bib}
\end{document}